
\documentclass{article}

\usepackage{microtype}
\usepackage{graphicx}
\usepackage{subfigure}
\usepackage{booktabs} 
\usepackage{xcolor}
\usepackage{isomath}
\usepackage{amsmath}
\usepackage{bm}
\usepackage{amsthm}
\newtheorem{mythm}{Theorem}

\usepackage{hyperref}


\usepackage[accepted]{icml2019}
\usepackage{authblk}

\title{Lie Group Auto-Encoder}
\author[1]{Liyu Gong\thanks{liyu.gong@uky.edu}}
\author[1,2]{Qiang Cheng\thanks{Qiang.Cheng@uky.edu}}
\affil[1]{Institute for Biomedical Informatics, University of Kentucky, Lexington, KY, USA.}
\affil[2]{Department of Computer Science, University of Kentucky, Lexington, KY, USA.}
\date{}



\begin{document}

\maketitle

\begin{abstract}
  In this paper, we propose an auto-encoder based generative neural
  network model whose encoder compresses the inputs into vectors in
  the tangent space of a special Lie group manifold: upper triangular
  positive definite affine transform matrices (UTDATs). UTDATs are
  representations of Gaussian distributions and can straightforwardly
  generate Gaussian distributed samples. Therefore, the encoder is
  trained together with a decoder (generator) which takes Gaussian
  distributed latent vectors as input. Compared with related
  generative models such as variational auto-encoder, the proposed
  model incorporates the information on geometric properties of
  Gaussian distributions. As a special case, we derive an exponential
  mapping layer for diagonal Gaussian UTDATs which eliminates matrix
  exponential operator compared with general exponential mapping in
  Lie group theory. Moreover, we derive an intrinsic loss for UTDAT
  Lie group which can be calculated as l-2 loss in the tangent
  space. Furthermore, inspired by the Lie group theory, we propose to
  use the Lie algebra vectors rather than the raw parameters
  (e.g. mean) of Gaussian distributions as compressed representations
  of original inputs. Experimental results verity the effectiveness of
  the proposed new generative model and the benefits gained from the
  Lie group structural information of UTDATs.
\end{abstract}

\begin{figure}[ht]
  \centering
  \includegraphics[scale=0.47]{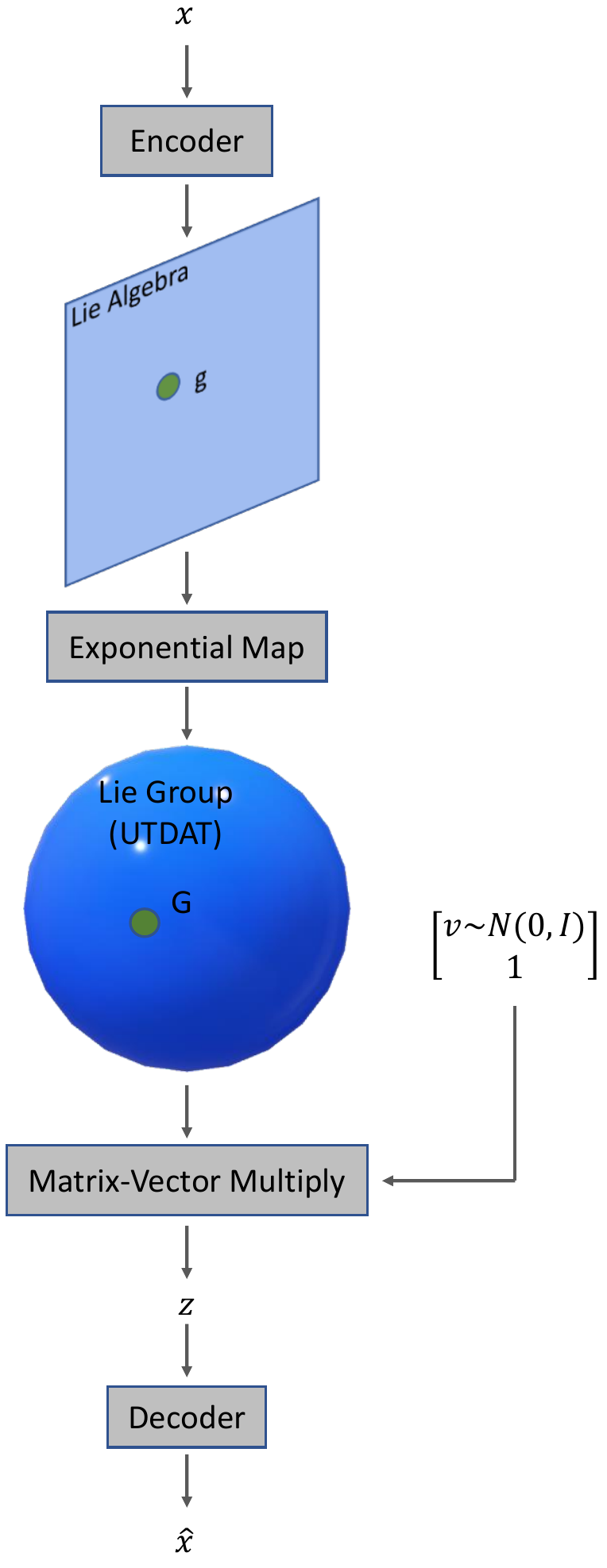}
  \caption{Overview of the proposed LGAE model. An input example
    $\vectorsym{x}$ is encoded into a vector in the tangent Lie
    algebra of the Lie group manifold formed by Gaussian
    distributions. Then, the vectors is mapped to a UTDAT
    representation of a Gaussian distribution. A latent vector is then
    sampled from this Gaussian distribution and fed to a decoder. The
    whole process is differentiable and optimized using stochastic
    gradient descent.}
  \label{fig:overview}
\end{figure}

\section{Introduction}
\label{sec:introduction}

Unsupervised deep learning is an active research area which shows
considerable progresses recently. Many deep neural network models are
invented to address various problems. For example, auto-encoders (AEs)
\cite{hinton_reducing_2006} are used to learn efficient data codings,
i.e. latent representations. Generative adversarial networks (GANs)
\cite{goodfellow_generative_2014} are powerful on generating
photo-realistic images from latent variables. While having achieved
numerous successes, both AEs and GANs are not without their
disadvantages. On one hand, AEs are good at obtaining a compressed
latent representation of a given input, but hard to generate realistic
samples randomly. On the other hand, GANs are good at randomly
generating realistic samples, but hard to map a given input to its
latent space representation. As a variant of AE, variational
auto-encoders (VAEs) \cite{kingma_auto-encoding_2013} are another kind
of generative models which can also obtain the latent representation
of a given input. The architectures of VAEs are similar to AEs except
that the encoders encode inputs into Gaussian distributions instead of
deterministic vectors. Trained with a Bayesian framework, the decoder
of a VAE is able to generate random samples from latent vectors which
are Gaussian distributed random noises. As a result, many applications
that require manipulating the latent space representations are also
feasible with VAE.

One major problem of VAEs is that the geometric structure of Gaussian
distributions is not considered. Traditional machine learning models
including neural networks as the encoders of VAEs are designed for
vector outputs. However, Gaussian distributions do not form a vector
space. This can be easily shown because the parameter vectors are not
closed under regular vector operators such as vector subtraction. The
variance-covariance matrix must be positive definite but simple vector
subtraction will break this requirement. Naively treating Gaussians as
parameter vectors ignores the geometric structure information of the
space formed by them. To exploit the geometric structural property, we
need to identify what kind of space it is. Gong et al.
\cite{gong_shape_2009} reveals that Gaussians can be represented as a
special kind of affine transformations which are identified as a Lie
group.

In this paper, we view Gaussian distributions from a geometrical
perspective using Lie group theory, and propose a novel generative
model using the encoder-decoder architecture. The overview of our
model is presented in Figure \ref{fig:overview}. As illustrated
therein, the central part of our model is a special Lie group: upper
triangular positive definite affine transform matrices (UTDATs). On
the one hand, UTDATs are matrix representations of Gaussian
distributions. That's to say, there is a one-to-one map between UTDATs
and Gaussian distributions. Therefore, we can analyze the geometric
properties of Gaussian distributions by analyzing the space of
UTDAT. Also, we can sample from Gaussian distributions by
matrix-vector multiplying UTDAT with a standard Gaussian noise
vector. On the other hand, UTDATs form a Lie group. Therefore, one can
work on the tangent spaces (which are Lie algebras) first, then
project back to Lie group by exponential mapping. Since Lie algebras
are vector spaces, they are suitable for most neural network
architectures. As a result, the encoder in our model outputs vectors
in the Lie algebra space. Those vectors are then projected to UTDATs
by a proposed exponential mapping layer. Latent vectors are then
generated by UTDATs and fed to a decoder. Specifically, for Gaussian
distributions with diagonal variance-covariance matrices, we derive a
closed form solution of exponential mapping which is fast and
differentiable. Therefore, our model can be trained by stochastic
gradient descents.

\section{Related works}
\label{sec:related-works}

GANs \cite{goodfellow_generative_2014}
\cite{zhang_self-attention_2018} \cite{miyato_spectral_2018}
\cite{mao_least_2017} are proven effective in generating
photo-realistic images in recent developments of neural
networks. Because of the adversarial training approach, it is
difficult for GANs to map inputs to latent vectors. Although some
approaches \cite{donahue_adversarial_2016}
\cite{schlegl_unsupervised_2017} are proposed to address this problem,
it still remains open and requires further investigation. Compared to
GANs, VAEs \cite{kingma_auto-encoding_2013}
\cite{doersch_tutorial_2016} are generative models which can easily
map an input to its corresponding latent vector. This advantage
enables VAEs to be either used as data compressors or employed in
application scenarios where manipulation of the latent space is
required \cite{yeh_semantic_2016}
\cite{deshpande_learning_2017}. Compared with AEs
\cite{hinton_reducing_2006}, VAEs encode inputs to Gaussian
distributions instead of deterministic latent vectors, and thus enable
them to generate examples. On one hand, Gaussian distributions do not
form a vector space. Naively treating them as vectors will ignore its
geometric properties. On the other hand, most machine learning models
including neural networks are designed to work with vector outputs. To
incorporate the geometric properties of Gaussian distributions, the
type of space of Gaussian distributions needs to be identified first;
then corresponding techniques from geometric theories will be adopted
to design the neural networks.

Geometric theories have been applied to analyze image feature
space. In \cite{tuzel_pedestrian_2008}, covariance matrices are used
as image feature representations for object detection. Because
covariance matrices are symmetric positive definite (SPD) matrices,
which form a Riemannian manifold, a corresponding boosting algorithm
is designed for SPD inputs. In \cite{gong_shape_2009}, Gaussian
distributions are used to model image features and the input space is
analyzed using Lie group theory.

In this paper, we propose a Lie group based generative model using the
\emph{encoder-decoder} architecture. The core of the model is Gaussian
distributions, but we incorporate the geometric properties by working
on the tangent space of Gaussian distributions rather than naively
treating them as vectors.

\section{Gaussians as Lie group}
\label{sec:gaussians-as-lie}

Let $\vectorsym{v}$ be a standard $n$-dimensional Gaussian random
vector
$\vectorsym{v}_0\sim{}\mathcal{N}(\vectorsym{0}, \matrixsym{I})$, then
any new vector
$\vectorsym{v}=\matrixsym{A}\vectorsym{v}_0+\vectorsym{\mu}$ which is
affine transformed from $\vectorsym{v}_0$ is also Gaussian distributed
$\vectorsym{v}\sim{}\mathcal{N}(\vectorsym{\mu}, \matrixsym{\Sigma})$,
where $\matrixsym{\Sigma}=\matrixsym{A}\matrixsym{A}^T$. That is, any
affine transformation can produce a Gaussian distributed random vector
from the standard Gaussian. Furthermore, if we restrict the affine
transformation to be
$\vectorsym{v}=\matrixsym{U}\vectorsym{v}_0+\vectorsym{\mu}$ where
$\matrixsym{U}$ is upper triangular and invertible (i.e. it has
positive eigen values only), then conversely we can find a unique $U$
for any non-degenerate $\matrixsym{\Sigma}$ such that
$\matrixsym{U}\matrixsym{U}^T=\matrixsym{\Sigma}$. In other words,
non-degenerate Gaussian distributions are isomorphic to UTDATs. Let
$G$ denote the matrix form of the following UTDAT:
\begin{equation}
  \matrixsym{G} =
  \begin{bmatrix}
    \matrixsym{U} & \vectorsym{\mu}\\
    0 & 1
  \end{bmatrix},
\end{equation}
then we can identify the type of spaces of Gaussian distributions by
identifying the type of spaces of $G$.

According to Lie theory \cite{knapp_lie_2002}, invertible affine
transformations form a Lie group with matrix multiplication and
inversion as its group operator. It can be easily verified that UTDATs
are closed under matrix multiplication and inversion. So UTDATs form a
subgroup of the general affine group. Since any subgroup of a Lie
group is still a Lie group, UTDATs form a Lie group. In consequence,
Gaussian distributions are elements of a Lie group.

A Lie group is also a differentiable manifold, with the property that
the group operators are compatible with the smooth structure. An
abstract Lie group has many isomorphic instances. Each of them is
called a representation. In Lie theory, matrix representation is a
useful tool for structure analysis. In our case, UTDAT is the matrix
representation of the abstract Lie group formed by Gaussian
distributions.

To exploit the geometric property of Lie group manifolds, the most
important tools are logarithmic mapping, exponential mapping and
geodesic distance. At a specific point of the group manifold, we can
obtain a tangent space which is called Lie algebra in Lie theory. The
Lie group manifold and Lie algebra are analogue to a curve and its
tangent lines in a Euclidean space. Tangent spaces (i.e. Lie algebras)
of a Lie group manifold are vector spaces. In our case, for
$n$-dimensional Gaussians, the corresponding Lie group is
$\frac{1}{2}n(n+3)$ dimensional. Accordingly, its tangent spaces are
$\mathcal{R}^{\frac{1}{2}n(n+3)}$. Note that, at each point of the
group manifold, we have a Lie algebra. We can project a point
$\matrixsym{G}$ of the UTDAT group manifold to the tangent space at a
specific point $\matrixsym{G}_0$ by the logarithmic mapping defined as
\begin{equation}
  \label{eq:logm}
  g=\log(\matrixsym{G}_0^{-1}\matrixsym{G}),
\end{equation}
where the $\log$ operator at the right hand side is matrix logarithm
operator. Note that the points are projected to a vector space even
though the form of the results are still matrices, which means that we
will flatten them to vectors wherever vectors are
required. Specifically, the point $\matrixsym{G}_0$ will be projected
to $\matrixsym{0}$ at its own tangent Lie algebra.

Conversely, the exponential mapping projects points in a tangent space
back to the Lie group manifold. Let $\vectorsym{g}$ be a point in the
tangent space of $\matrixsym{G}_0$, then the exponential mapping is
defined as
\begin{equation}
  \label{eq:expm}
  G=\matrixsym{G}_0\exp(\vectorsym{g}),
\end{equation}
where the $\exp$ operator at the right hand side is matrix exponential
operator. For two points $\matrixsym{G}_1$ and $\matrixsym{G}_2$ of a
Lie group manifold, the geodesic distance is the length of the
shortest path connecting them along the manifold, which is defined as
\begin{equation}
  \label{eq:geodesic}
  d_{LG}(\matrixsym{G}_1, \matrixsym{G}_2)=\lVert\log(\matrixsym{G}_1^{-1}\matrixsym{G}_2)\rVert_{F},
\end{equation}
where $\lVert\cdot\rVert_{F}$ is the Frobenius norm.

\section{Lie group auto-encoder}
\label{sec:lie-group-auto}

\subsection{Overall architecture}
\label{sec:overview}

Suppose we want to generate samples from a complex distribution
$P(\vectorsym{X})$ where $\vectorsym{X}\in\mathcal{R}^D$. One way to
accomplish this task is to generate samples from a joint distribution
$P(\vectorsym{Z}, \vectorsym{X})$ first, then discard the part
belonging to $\vectorsym{Z}$ and keep the part belonging to
$\vectorsym{X}$ only. This seems giving us no benefit at first sight
because it is usually difficult to sample from
$P(\vectorsym{Z}, \vectorsym{X})$ if sampling from $P(\vectorsym{X})$
is hard. However, if we decompose the joint distribution with a
Bayesian formula
\begin{equation}
  \label{eq:bayes}
  P(\vectorsym{Z}, \vectorsym{X}) = P(\vectorsym{X}|\vectorsym{Z})P(\vectorsym{Z}),
\end{equation}
then the joint distribution can be sampled by a two step process:
Firstly sample from $P(\vectorsym{Z})$, then sample from
$P(\vectorsym{X}|\vectorsym{Z})$. The benefits come from the fact that
both $P(\vectorsym{Z})$ and $P(\vectorsym{X|Z})$ may be much easier to
sample from.

Estimating parameters in $P(\vectorsym{Z}, \vectorsym{X})$ as modeled
in Eq. \ref{eq:bayes} is not easy because samples from the joint
distribution are required; however, in most scenarios, we only have
samples $\{\vectorsym{x}_i: i=1,2,\cdots,n\}$ from the marginal
distribution $P(\vectorsym{X})$. To overcome this problem, we augment
each example $\vectorsym{x}_i$ from the marginal distribution to
several examples
$\{(\vectorsym{z}_{ij}, \vectorsym{x}_i): j=1,2,\cdots, m\}$ in the
joint distribution by sampling $\vectorsym{z}_{ij}$ from the
conditional distribution $P(\vectorsym{Z}|\vectorsym{X})$.


Note that $Z$ is an auxiliary random vector helping us perform
sampling from the marginal distribution $P(\vectorsym{X})$, so it can
be any kind of distribution but should be easy to sample from. In this
paper, we let
$P(\vectorsym{Z})\sim\mathcal{N}(\vectorsym{0}, \matrixsym{I})$.

In practice, $P(\vectorsym{Z}|\vectorsym{X})$ should be chosen
according to the type of data space of $\vectorsym{X}$. For example,
if $X$ is continuous, we can model $P(\vectorsym{X}|\vectorsym{Z})$ as
a Gaussian distribution with a fixed isotropic variance-covariance
matrix. For binary and categorical typed $\vectorsym{X}$, Bernoulli
and multinomial distributions can be used, respectively.

Given $P(\vectorsym{Z})$ and $P(\vectorsym{X}|\vectorsym{Z})$,
$P(\vectorsym{Z}|\vectorsym{X})$ is usually complex and thus difficult
to sample from. So we sample $\vectorsym{z}_{ij}$ from another
distribution $Q(\vectorsym{Z}|\vectorsym{X})$ instead. In this paper,
we model $Q(\vectorsym{Z}|\vectorsym{X})$ as Gaussian distributions
with diagonal variance-covariance
matrices. $Q(\vectorsym{Z}|\vectorsym{X})$ should satisfy the
following objectives as much as possible:
\begin{itemize}
\item $Q(\vectorsym{Z}|\vectorsym{X})$ should approximate
  $P(\vectorsym{Z}|\vectorsym{X})$. Therefore, given
  $\vectorsym{z}_{ij}$ sampled from
  $Q(\vectorsym{Z}|\vectorsym{x}_i)$, $\hat{\vectorsym{x}}_i$ sampled
  from $P(\vectorsym{X}|\vectorsym{Z})$ should reconstruct
  $\vectorsym{x}_i$.
\item $\{\vectorsym{z}_{ij}:i=1,2,\cdots,n; j=1,2,\cdots,m\}$ should
    fit the marginal $P(\vectorsym{Z})$ well.
\end{itemize}

To optimize the first objective, we minimize the reconstruction loss
$L_{rec}$, which is the mean squared error (MSE) for continuous
$\vectorsym{X}$ and cross-entropy for binary and categorical
$\vectorsym{X}$.

For the second objective, directly optimizing it using
$\vectorsym{z}_{ij}$ is not practical because we need a large sample
size $m$ for $P(\matrixsym{Z}|\matrixsym{X})$ to accurately estimate
model parameters.  The total sample size of $\vectorsym{z}_{ij}$,
which is $mn$, is too big for computation. To overcome this problem,
we consider Gaussian distributions as points in the corresponding Lie
group as we discussed in section \ref{sec:gaussians-as-lie}. Note that
the set $\{\vectorsym{z}_{ij}, i=1,2,\cdots,n; j=1,2,\cdots,m\}$ is
sampled from a set of Gaussian distributions
$Q(\vectorsym{Z}|\vectorsym{x}_i)$. The second objective implies the
average distribution of those Gaussians should be $P(\vectorsym{Z})$,
which is a standard Gaussian. However, Gaussian distributions, which
are equivalently represented as UTDATs, do not conform to the commonly
used Euclidean geometry. Instead, we need to find the intrinsic mean
of those Gaussians through Lie group geometry because Gaussian
distributions have a Lie group structure. We derive a Lie group
intrinsic loss $L_{LG}$ to optimize the second objective. The details
of $L_{LG}$ will be present in subsection
\ref{sec:lie-group-intrinsic}.

In our proposed Lie group auto-encoder (LGAE),
$P(\vectorsym{X}|\vectorsym{Z})$ is called a \emph{decoder} or
\emph{generator}, and is implemented with neural
networks. $Q(\vectorsym{Z}|\vectorsym{X})$ is also implemented with
neural networks. Note that $Q(\vectorsym{Z}|\vectorsym{X})$ is a
Gaussian distribution, so the corresponding neural network is a
function whose output is a Gaussian distribution. Neural networks as
well as many other machine learning models are typically designed for
vector outputs. Being intrinsically a Lie group as discussed in
section \ref{sec:gaussians-as-lie}, Gaussian distributions do not form
a vector space. To best exploit the geometric structure of the
Gaussians, we first estimate corresponding points $\matrixsym{g}_i$ in
the tangent Lie algebra at the position of the intrinsic mean of
$\{\matrixsym{G}_i, i=1,2,\cdots,n\}$ using neural networks. As
$L_{LG}$ requires the intrinsic mean to be the standard Gaussian
$P(\vectorsym{Z})=\mathcal{N}(\vectorsym{0}, \matrixsym{I})$, whose
UTDAT representation is the identity matrix $\matrixsym{I}$, the
corresponding point $\matrixsym{g}_i$ in the tangent space of
$\matrixsym{G}_i$ is
\begin{equation}
  \label{eq:gi}
  \matrixsym{g}_i=\log(\matrixsym{G}_i).
\end{equation}
Since $\{\matrixsym{g}_i, i=1,2,\cdots, n\}$ are in a vector space,
they can be well estimated by neural networks. $\matrixsym{g}_i$s are
then projected to the Lie group by an exponential mapping layer
\begin{equation}
  \label{eq:Gi}
  \matrixsym{G}_i=\exp(\matrixsym{g}_i).
\end{equation}
For diagonal Gaussians, we derive a closed-form solution of the
exponential mapping which eliminates the requirement of matrix
exponential operator. The details will be presented in subsection
\ref{sec:expon-mapp-layer}.

The whole architecture of LGAE is summarized in Figure
\ref{fig:overview}. A typical forward procedure works as follows:
Firstly, the \emph{encoder} encodes an input $\vectorsym{x}_i$ into a
point $\matrixsym{g}_i$ in the tangent Lie algebra. The exponential
mapping layer then projects $\matrixsym{g}_i$ to the UTDAT matrix
$\matrixsym{G}_i$ of the Lie group manifold. A latent vector
$\vectorsym{z}$ is then sampled from the Gaussian distribution
represented by $\matrixsym{G}_i$ by multiplying $\matrixsym{G}_i$ with
a standard Gaussian noise vector. The details of the sampling
operation will be described in section \ref{sec:sampl-from-gauss}. The
\emph{decoder} (or \emph{generator}) network then generates
$\hat{\vectorsym{x}}_i$ which is the reconstructed version of
$\vectorsym{x}_i$. The whole network is optimized by minimizing the
following loss
\begin{equation}
  \label{eq:loss}
  L = \lambda{}L_{LG}+L_{rec},
\end{equation}
where $L_{LG}$ and $L_{rec}$ are the Lie group intrinsic loss and
reconstruction loss, respectively. Because the whole forward process
and the loss are differentiable, the optimization can be achieved by
\emph{stochastic gradient descent} method.

\subsection{Exponential mapping layer}
\label{sec:expon-mapp-layer}

We derive the exponential mapping
$\matrixsym{g}_i=\exp({\matrixsym{G}_i})$ for diagonal Gaussians. When
$\matrixsym{G}_i\sim\mathcal{N}(\vectorsym{\mu}_i,\matrixsym{\Sigma}_i)$
is diagonal, we have
\begin{equation}
  \label{eq:cov}
  \matrixsym{\Sigma}_i=
  \begin{bmatrix}
    \sigma_{i1}^2 & &\\
    & \sigma_{i2}^2 & \\
    & & \ddots &\\
    & & & \sigma_{iK}^2.
  \end{bmatrix}.
\end{equation}
The following theorem gives the forms of $\matrixsym{G}_i$ and
$\matrixsym{g}_i$, as well as their relationship.
\begin{mythm}
  Let $\matrixsym{G}_i$ be the UTDAT and $\matrixsym{g}_i$ be the
  corresponding vector in its tangent Lie algebra at the standard
  Gaussian. Then
  \begin{align}
    \label{eq:diagonal-Gi}
    \matrixsym{G}_i&=
    \begin{bmatrix}
      \sigma_{i1} & & &\mu_{i1}\\
      & \sigma_{i2} & & \mu_{i2}\\
      & & \ddots & \vdots\\
      0 & 0 & \hdots &   1
    \end{bmatrix}\\
    \label{eq:diagonal-gi}
    \matrixsym{g}_i&=
    \begin{bmatrix}
      \phi_{i1} & & &\theta_{i1}\\
      & \phi_{i2} & & \theta_{i2}\\
      & & \ddots & \vdots\\
      0 & 0 & \hdots &   1
    \end{bmatrix},
  \end{align}
  where
  \begin{align}
    \phi_{ik} &=\log(\sigma_{ik})\\
    \theta_{ik}&=\frac{\mu_{ik}\log(\sigma_{ik})}{\sigma_{ik}-1}.
  \end{align}
\end{mythm}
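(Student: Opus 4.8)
The plan is to verify directly that the matrix exponential of the candidate $\matrixsym{g}_i$ in Eq.~\ref{eq:diagonal-gi} equals the candidate $\matrixsym{G}_i$ in Eq.~\ref{eq:diagonal-Gi}, and then to use the defining relation $\matrixsym{g}_i=\log(\matrixsym{G}_i)$ of Eq.~\ref{eq:gi}, together with uniqueness of the principal matrix logarithm, to conclude that these are indeed the Lie group and Lie algebra elements in question.

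First I would pin down the form of $\matrixsym{G}_i$. Since $\matrixsym{\Sigma}_i$ in Eq.~\ref{eq:cov} is diagonal with positive entries $\sigma_{ik}^2$, I would argue that the only upper triangular matrix $\matrixsym{U}_i$ with positive eigenvalues satisfying $\matrixsym{U}_i\matrixsym{U}_i^T=\matrixsym{\Sigma}_i$ is $\matrixsym{U}_i=\mathrm{diag}(\sigma_{i1},\dots,\sigma_{iK})$: requiring $\matrixsym{U}_i\matrixsym{U}_i^T$ to be diagonal means the rows of $\matrixsym{U}_i$ are mutually orthogonal, and reading off this condition from the bottom row upward forces every strictly-upper-triangular entry to vanish, while positivity of the eigenvalues fixes the signs of the diagonal. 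Substituting $\matrixsym{U}_i=\mathrm{diag}(\sigma_{ik})$ into the UTDAT block form gives Eq.~\ref{eq:diagonal-Gi}.

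The core is then a single power-series computation. Because $\matrixsym{g}_i$ lies in the Lie algebra of the affine group its last row vanishes, so writing $\matrixsym{g}_i=\matrixsym{D}+\matrixsym{N}$ with $\matrixsym{D}=\mathrm{diag}(\phi_{i1},\dots,\phi_{iK},0)$ and $\matrixsym{N}$ the matrix whose only nonzero entries are $\theta_{ik}$ in row $k$ of the last column, I would prove by induction on $p\ge 1$ that $\matrixsym{g}_i^{\,p}$ is again supported on the diagonal and the last column, with diagonal part $\mathrm{diag}(\phi_{i1}^p,\dots,\phi_{iK}^p,0)$ and last-column entries $\phi_{ik}^{\,p-1}\theta_{ik}$. (This also shows $\matrixsym{N}^2=\matrixsym{0}$ and that $\matrixsym{D}$ and $\matrixsym{N}$ do not commute, which is why one cannot simply factor the exponential and must track the powers.) Summing the exponential series termwise then yields diagonal entries $\sum_{p\ge 0}\phi_{ik}^p/p!=e^{\phi_{ik}}$, bottom-right entry $1$, and last-column entries $\theta_{ik}\sum_{p\ge 1}\phi_{ik}^{\,p-1}/p!=\theta_{ik}(e^{\phi_{ik}}-1)/\phi_{ik}$; equivalently one may quote the standard affine-group identity $\exp\bigl(\begin{bmatrix}\matrixsym{X}&\vectorsym{y}\\0&0\end{bmatrix}\bigr)=\begin{bmatrix}e^{\matrixsym{X}}&\matrixsym{X}^{-1}(e^{\matrixsym{X}}-\matrixsym{I})\vectorsym{y}\\0&1\end{bmatrix}$ with $\matrixsym{X}=\mathrm{diag}(\phi_{ik})$.

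Finally, matching $\exp(\matrixsym{g}_i)$ with Eq.~\ref{eq:diagonal-Gi} entrywise gives $e^{\phi_{ik}}=\sigma_{ik}$, hence $\phi_{ik}=\log(\sigma_{ik})$, and $\theta_{ik}(e^{\phi_{ik}}-1)/\phi_{ik}=\mu_{ik}$, hence $\theta_{ik}=\mu_{ik}\log(\sigma_{ik})/(\sigma_{ik}-1)$, exactly the stated formulas; since $\matrixsym{G}_i$ has positive real eigenvalues it has a unique real (principal) logarithm, which is moreover upper triangular because $\matrixsym{G}_i$ is, so the $\phi_{ik},\theta_{ik}$ just obtained are genuinely the components of $\matrixsym{g}_i=\log(\matrixsym{G}_i)$. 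I expect the only subtlety worth spelling out to be the removable singularity at $\sigma_{ik}=1$, i.e.\ $\phi_{ik}=0$: there $\matrixsym{X}$ is no longer invertible, but $\sum_{p\ge1}\phi_{ik}^{\,p-1}/p!=1$, so $\theta_{ik}=\mu_{ik}$ and the quotient formula is to be read in the limiting sense $\lim_{\phi\to0}(e^{\phi}-1)/\phi=1$. The rest is routine bookkeeping with the triangular structure.
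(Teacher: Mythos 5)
Your proof is correct, and it essentially coincides with the \emph{alternative} derivation the paper itself sketches at the end of its proof: identify the candidate Lie algebra element, exponentiate it by the power series, and match entries against $\matrixsym{G}_i$. The paper's \emph{primary} route goes the other way, expanding $\log(\matrixsym{G}_i)=\log(\matrixsym{I}+\matrixsym{H})=\sum_{t\ge 1}(-1)^{t-1}\matrixsym{H}^t/t$ with $\matrixsym{H}=\matrixsym{G}_i-\matrixsym{I}$ and reading off $\phi_{ik}$ and $\theta_{ik}$ from the resulting scalar series. Your direction has two genuine advantages: the exponential series converges unconditionally, whereas the paper's logarithm series only converges when the spectral radius of $\matrixsym{H}$ is below one, i.e.\ $0<\sigma_{ik}<2$, so the paper's primary computation implicitly relies on analytic continuation outside that range; and you explicitly close the logical loop with the uniqueness of the principal real logarithm of a matrix with positive real spectrum, a step the paper leaves tacit. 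Your induction showing $\matrixsym{g}_i^{\,p}=\matrixsym{D}^p+\matrixsym{D}^{p-1}\matrixsym{N}$ (using $\matrixsym{N}\matrixsym{D}=\matrixsym{N}^2=\matrixsym{0}$) is a cleaner justification of the termwise summation than the paper's unexplained jump to the summed entries, and your uniqueness argument for the upper triangular factor $\matrixsym{U}_i=\mathrm{diag}(\sigma_{ik})$ supplies the detail behind the paper's ``straightforwardly get Eq.~\ref{eq:diagonal-Gi}.'' Your handling of the removable singularity at $\phi_{ik}=0$ matches the remark the paper makes immediately after the theorem. In short: same computation as the paper's second route, carried out more carefully.
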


\begin{proof}
  By the definition of UTDAT, we can straightforwardly get
  Eq. \ref{eq:diagonal-Gi}. Let
  $\matrixsym{H}=\matrixsym{G}_i-\matrixsym{I}$. Using the series form
  of matrix logarithm, we have
  \begin{align}
  \matrixsym{g}_i
  &=\log(\matrixsym{G}_i) \nonumber \\
    &=\log(\matrixsym{I}+\matrixsym{H}) \nonumber \\
    \label{eq:diagonal-log-last}
  &=\sum_{t=1}^{\infty}(-1)^{t-1}\frac{\matrixsym{H}^t}{t}.
  \end{align}
  By substituting $\matrixsym{H}$ into \ref{eq:diagonal-log-last}, we get Eq. \ref{eq:diagonal-gi} and the following:
  \begin{align*}
  \phi_{ik}
  &=\sum\limits_{t=1}^{\infty}(-1)^{t-1}\frac{(\sigma_{ik}-1)^t}{t}\\
  &=\log(\sigma_{ik})
\end{align*}
and
\begin{align*}
  \theta_{ik}
  &=\sum\limits_{t=1}^{\infty}(-1)^{t-1}\frac{\mu_{ik}(\sigma_{ik}-1)^{t-1}}{t}\\
  &=\frac{\mu_{ik}\log(\sigma_{ik})}{\sigma_{ik}-1}.
\end{align*}
Alternatively, after we identify $\matrixsym{g}_i$ has the form as in
Eq. \ref{eq:diagonal-gi}, we can derive the exponential mapping by the
definition of matrix exponential
\begin{align*}
  \matrixsym{G}_i&=\exp(\matrixsym{g}_i)=\sum_{t=0}^\infty\frac{\matrixsym{g}_i^t}{t!}\\
  &=\begin{bmatrix}
    \sum_{t=0}^\infty\frac{\phi_{i1}^t}{t!} &  & \theta_{i1}\sum_{t=1}^\infty\frac{\phi_{i1}^{t-1}}{t!}\\
    & \ddots & \vdots\\
    0 & \hdots & 1
  \end{bmatrix}\\
 &=\begin{bmatrix}
    e^{\phi_{i1}} &  & \frac{\theta_{i1}}{\phi_{i1}}\left(\sum_{t=0}^\infty\frac{\phi_{i1}^t}{t!}-1\right)\\
    & \ddots & \vdots\\
    0 & \hdots & 1
 \end{bmatrix}\\
 &=\begin{bmatrix}
   e^{\phi_{i1}} &  & \frac{\theta_{ik}(e^{\phi_{i1}}-1)}{\phi_{i1}}\\
    & \ddots & \vdots\\
    0 & \hdots & 1
 \end{bmatrix}.
\end{align*}
\end{proof}

The exponential mapping layer is expressed as 
\begin{align}
  \label{eq:exp-map}
  \sigma_{ik}&=e^{\phi_{ik}}\\
  \mu_{ik}&=\frac{\theta_{ik}(e^{\phi_{ik}}-1)}{\phi_{ik}}
\end{align}
Note that if $\sigma_{ik}=1$ (i.e. $\phi_{ik}=0$), then
$\mu_{ik}=\theta_{ik}$ due to the fact that
$\lim_{x\to{}0}\frac{\log(x+1)}{x}=1$ or
$\lim_{x\to{}0}\frac{e^x-1}{x}=1$.

\subsection{Lie group intrinsic loss}
\label{sec:lie-group-intrinsic}

Let $\matrixsym{G}_i$ be the UTDAT
representation of $P(\vectorsym{Z}|\vectorsym{x}_i)$. The intrinsic
mean $\matrixsym{G}^*$ of those $\matrixsym{G}_i$s is defined as
\begin{equation}
  \label{eq:intrinsic-mean}
  \matrixsym{G}^*=\underset{\matrixsym{G}}{\arg\min}\sum_{i=1}^nd_{LG}^2(\matrixsym{G}, \matrixsym{G}_i).
\end{equation}
The second objective in the previous subsection requires that
$\matrixsym{G}^*=\matrixsym{I}$, which is equivalent to minimizing the
loss
\begin{align}
  L_{LG}
  \label{eq:intrinsic-mean-1}
  &=\sum_{i=1}^nd_{LG}^2(\matrixsym{I}, \matrixsym{G}_i)\\
  &=\sum_{i=1}^n\lVert\log(\matrixsym{G}_i)\rVert_{F}^2 \nonumber \\
  \label{eq:intrinsic-mean-2}
  &=\sum_{i=1}^n\lVert\matrixsym{g}_i\rVert_{F}^2.
\end{align}
So the intrinsic loss plays a role of regularization during the
training. Since the tangent Lie algebra is a vector space, the
Frobenius norm is equivalent to the $l^2$-norm if we flatten matrix
$\matrixsym{g}_i$ to a vector. Eq. \ref{eq:intrinsic-mean-1} plays a
role of regularization which requires all the Gaussians
$\matrixsym{G}_i$ to be grouped together around the standard
Gaussian. Eq. \ref{eq:intrinsic-mean-2} shows that we can regularize
on the tangent Lie algebra instead, which avoids the matrix logarithm
operation. Specifically, for diagonal Gaussians, we have
\begin{align}
  L_{LG}
  &=\sum_{i=1}^n\lVert\matrixsym{g}_i\rVert_{F}^2\\
  &=\sum_{i=1}^n\sum_{k=1}^K(\phi_{ik}^2+\theta_{ik}^2)
\end{align}

\subsection{Sampling from Gaussians}
\label{sec:sampl-from-gauss}

According to the properties of Gaussian distributions discussed in
section \ref{sec:gaussians-as-lie}, sampling from an arbitrary
Gaussian distribution can be achieved by transforming a standard
Gaussian distribution with the corresponding, i.e.
\begin{equation}
  \label{eq:sample}
  \begin{bmatrix}
    \vectorsym{z}_{ij}\\
    1
  \end{bmatrix}=\matrixsym{G}_i
  \begin{bmatrix}
    \vectorsym{v}_{ij}\\
    1
  \end{bmatrix},
\end{equation}
where $\vectorsym{v}_{ij}$ is sample from
$\vectorsym{V}\sim\mathcal{N}(\vectorsym{0}, \matrixsym{I})$. Note
that, this sampling operator is differentiable, which means that
gradients can be back-propagated through the sampling layer to the
previous layers. When $\matrixsym{G}_i$ is a diagonal Gaussian, we
have
\begin{equation}
  \label{eq:diag-sample}
  \vectorsym{z}_{ij}=\vectorsym{\sigma}\odot\vectorsym{v}_{ij}+\vectorsym{\mu},
\end{equation}
where $\vectorsym{\sigma}=[\sigma_{i1},\cdots,\sigma_{iK}]^T$,
$\vectorsym{\mu}=[\mu_{i1},\cdots,\mu_{iK}]^T$ and $\odot$ is the
element-wise multiplication. Therefore, the re-parameterization trick
in \cite{kingma_auto-encoding_2013} is a special case of sampling of
UTDAT represented Gaussian distributions.

\begin{figure*}[ht]
  \centering
  \begin{tabular}{cccc}
    \includegraphics[scale=0.22]{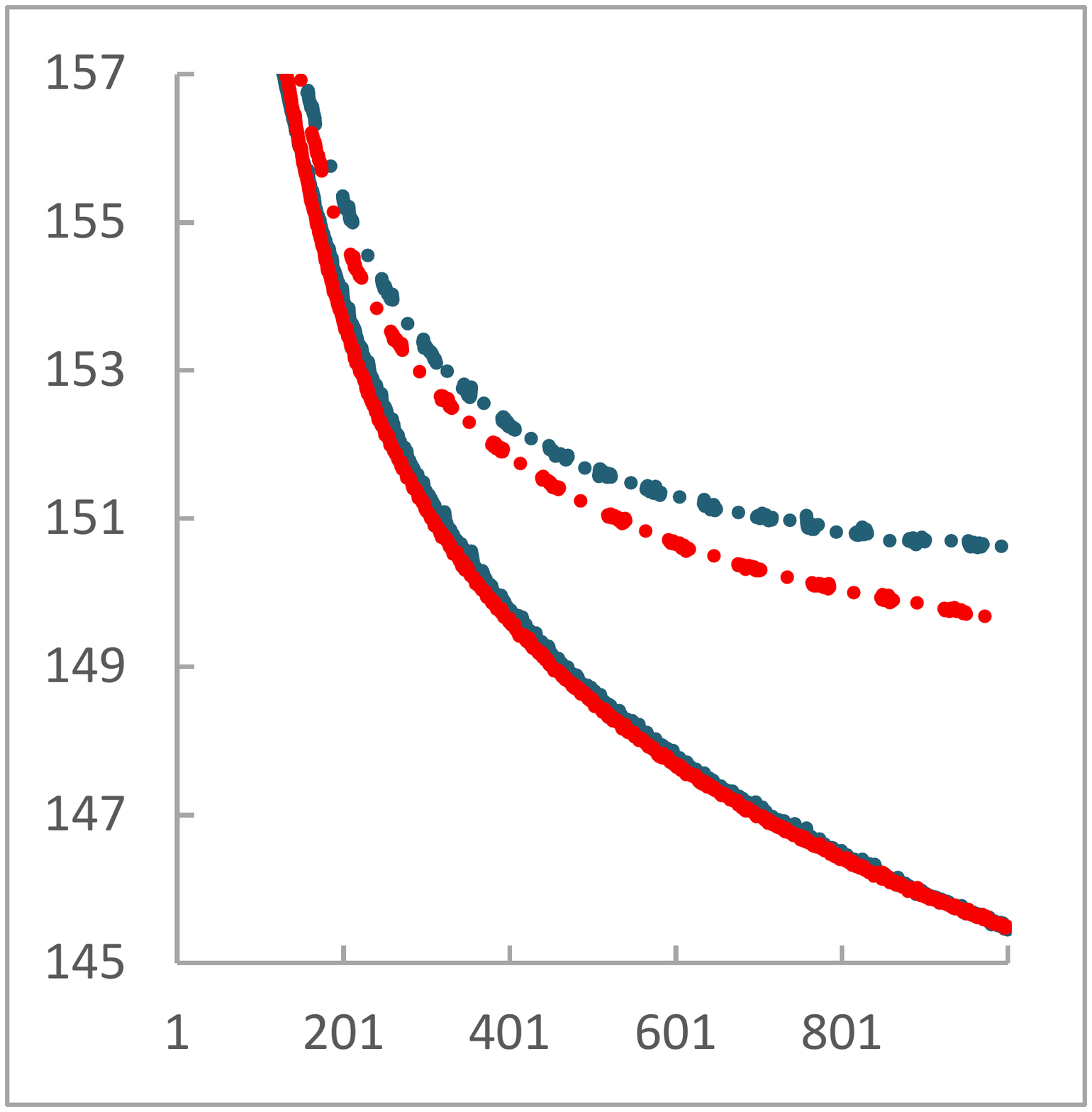}
    & \includegraphics[scale=0.22]{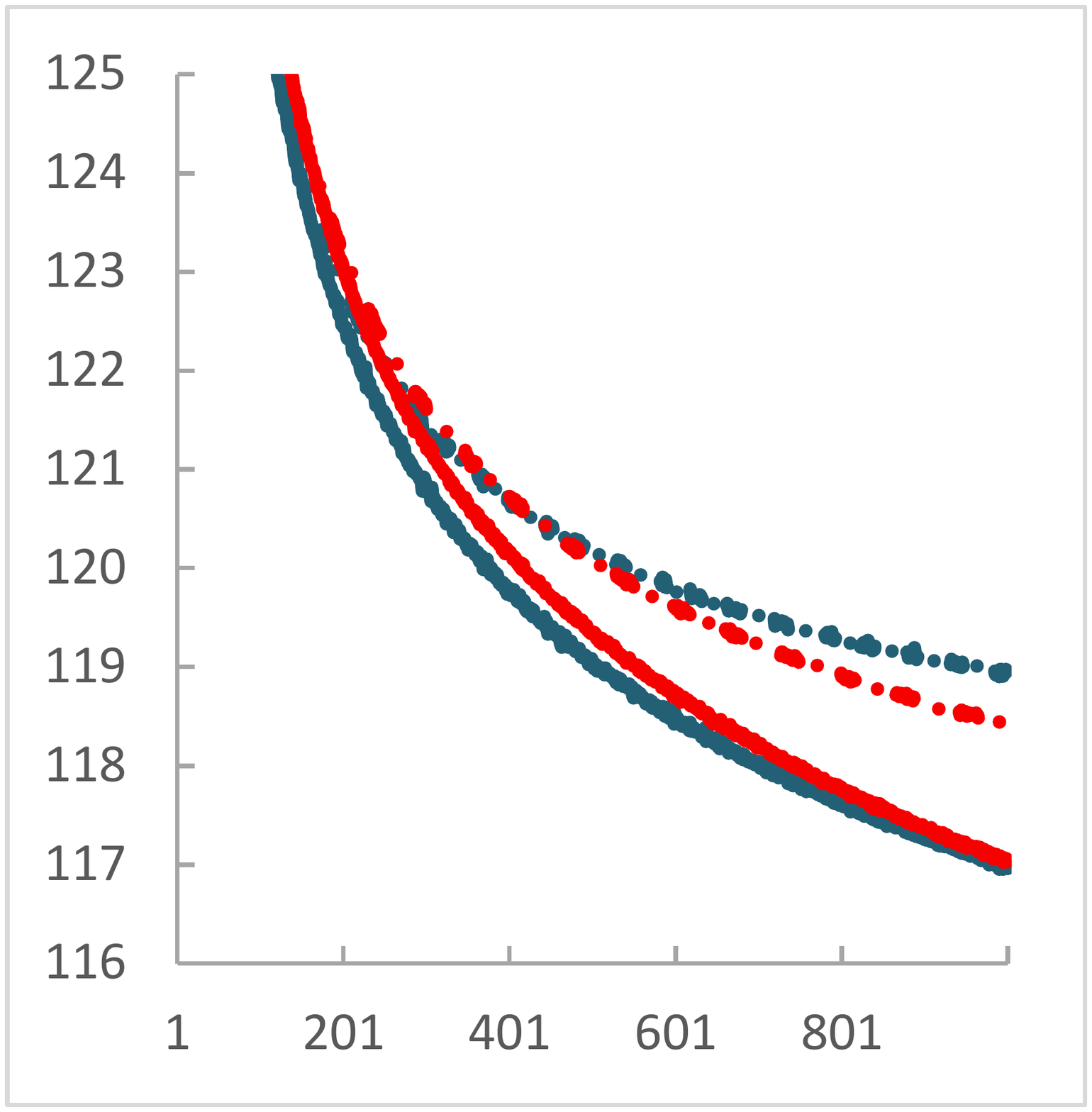}
    & \includegraphics[scale=0.22]{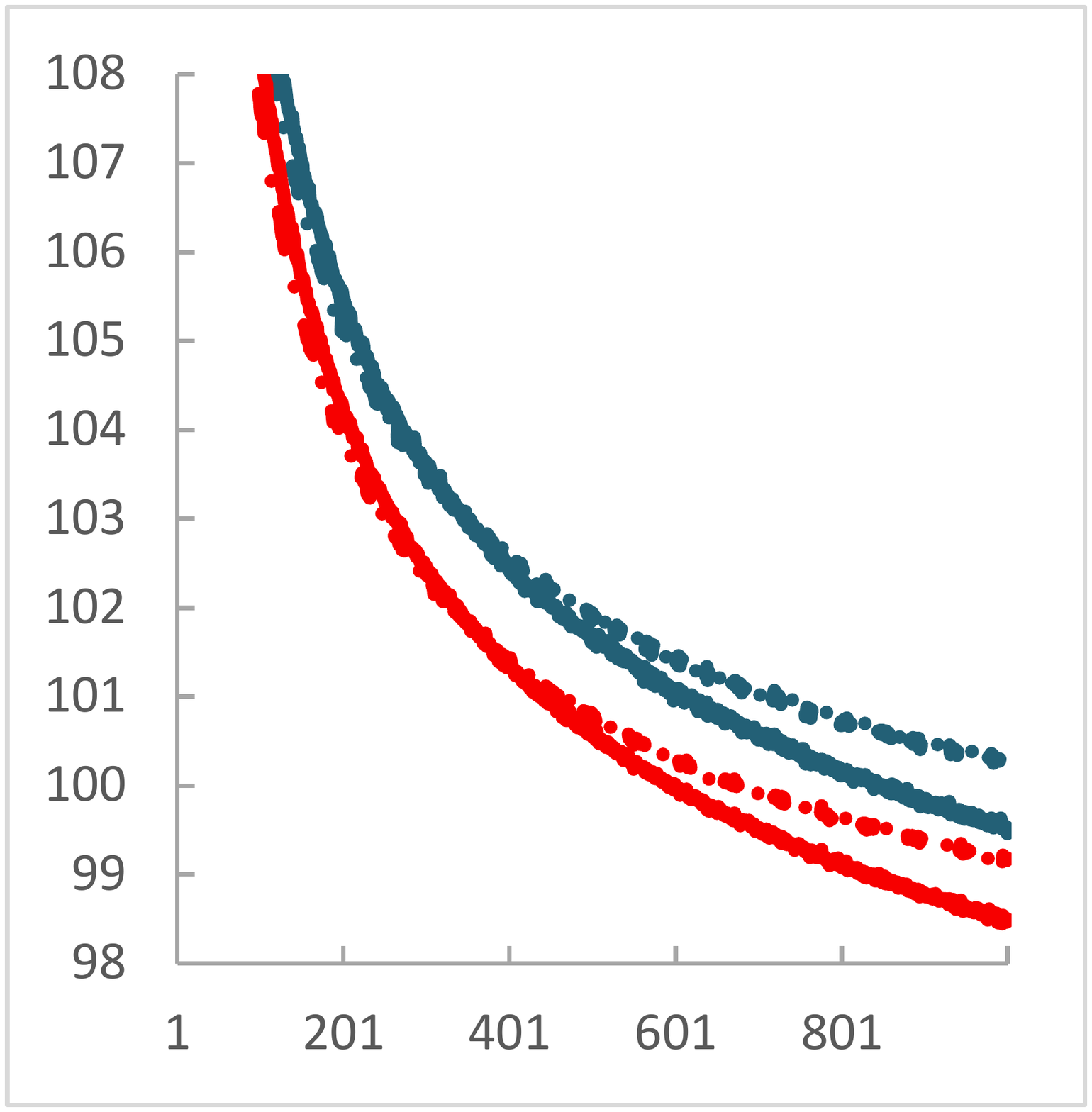}
    & \includegraphics[scale=0.22]{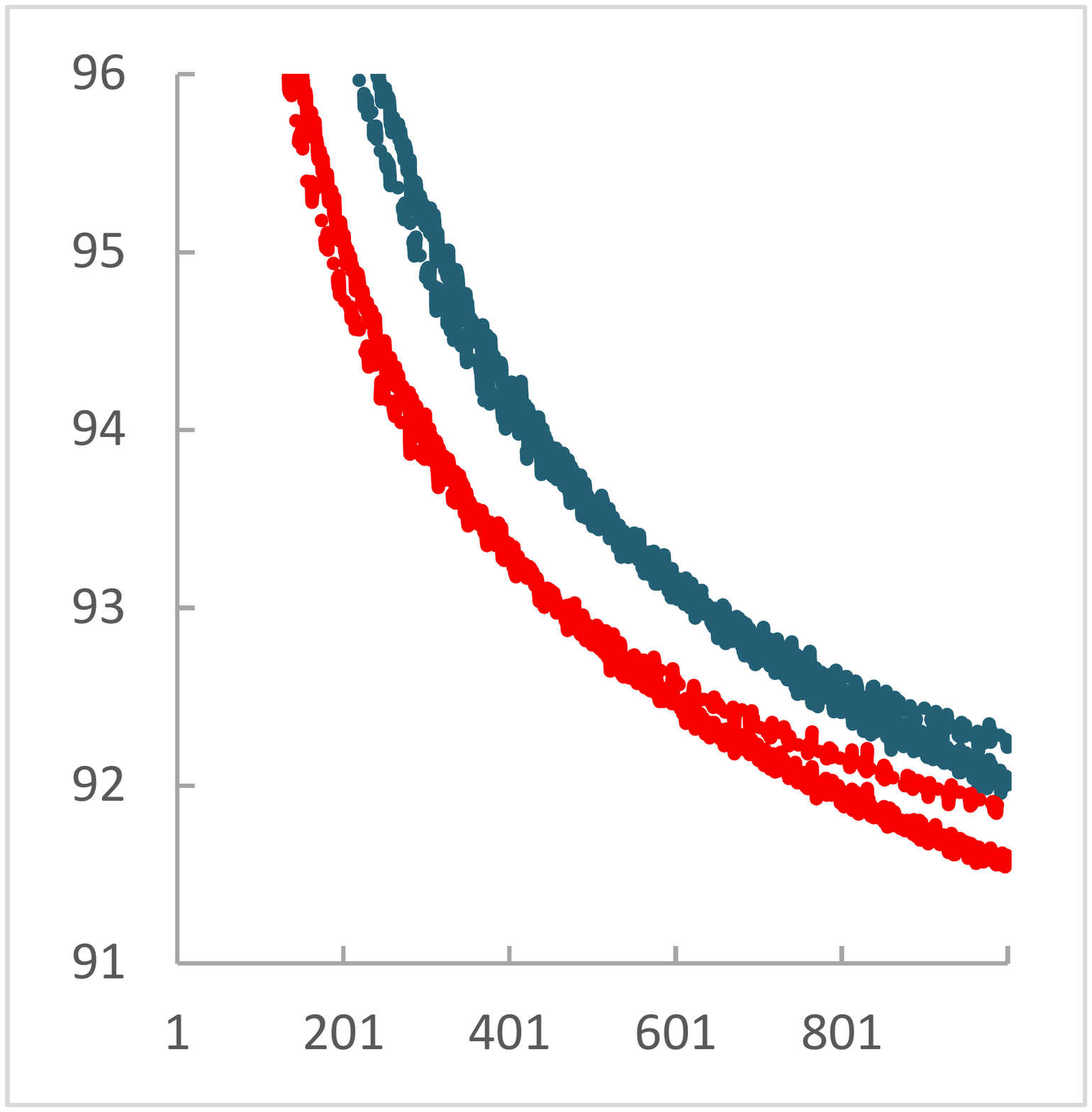}\\
    MNIST (K=2) & MNIST (K=5) & MNIST (K=10) & MNIST (K=20)\\
    & & & \\
    \includegraphics[scale=0.15]{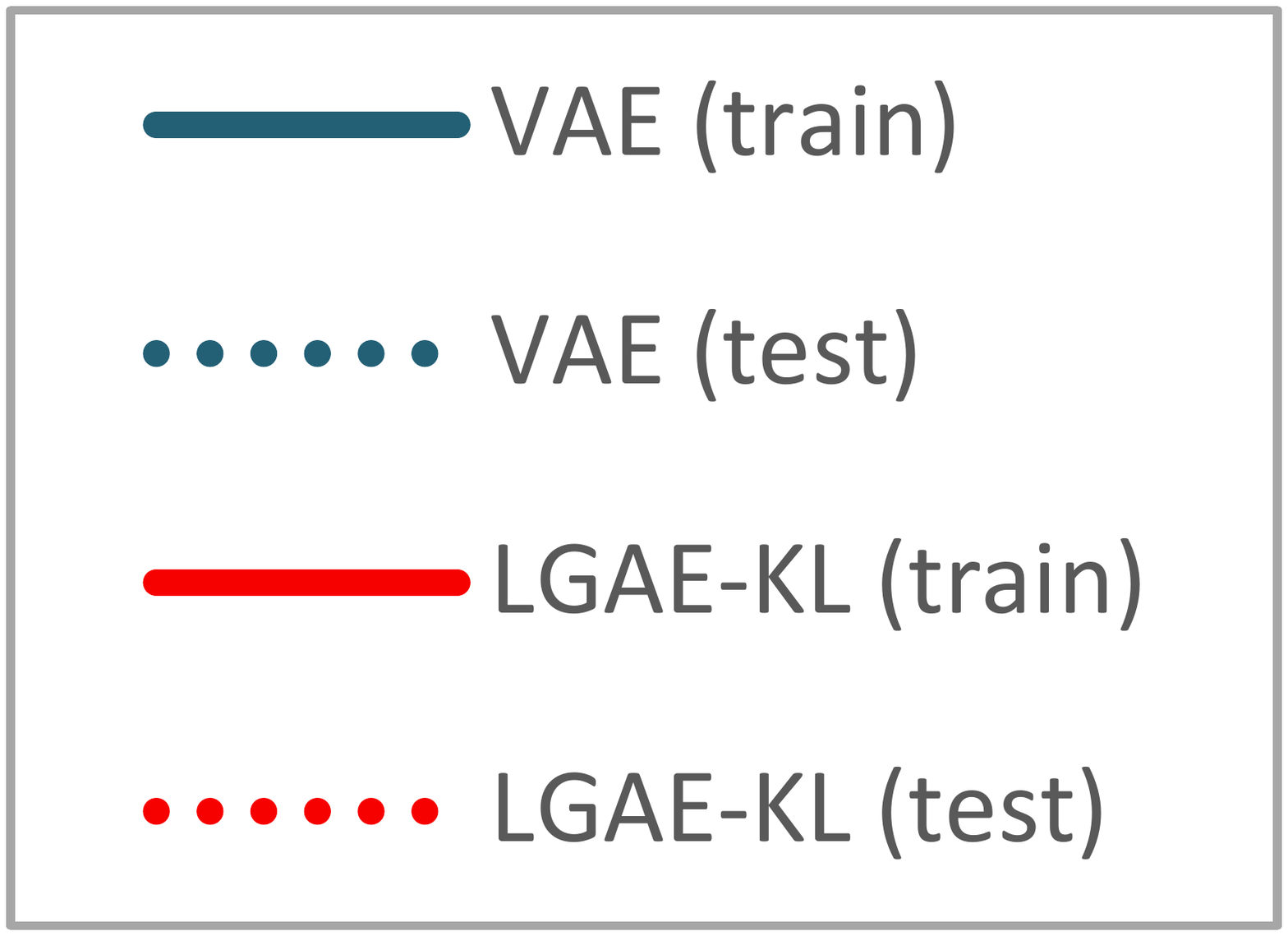}
    & \includegraphics[scale=0.22]{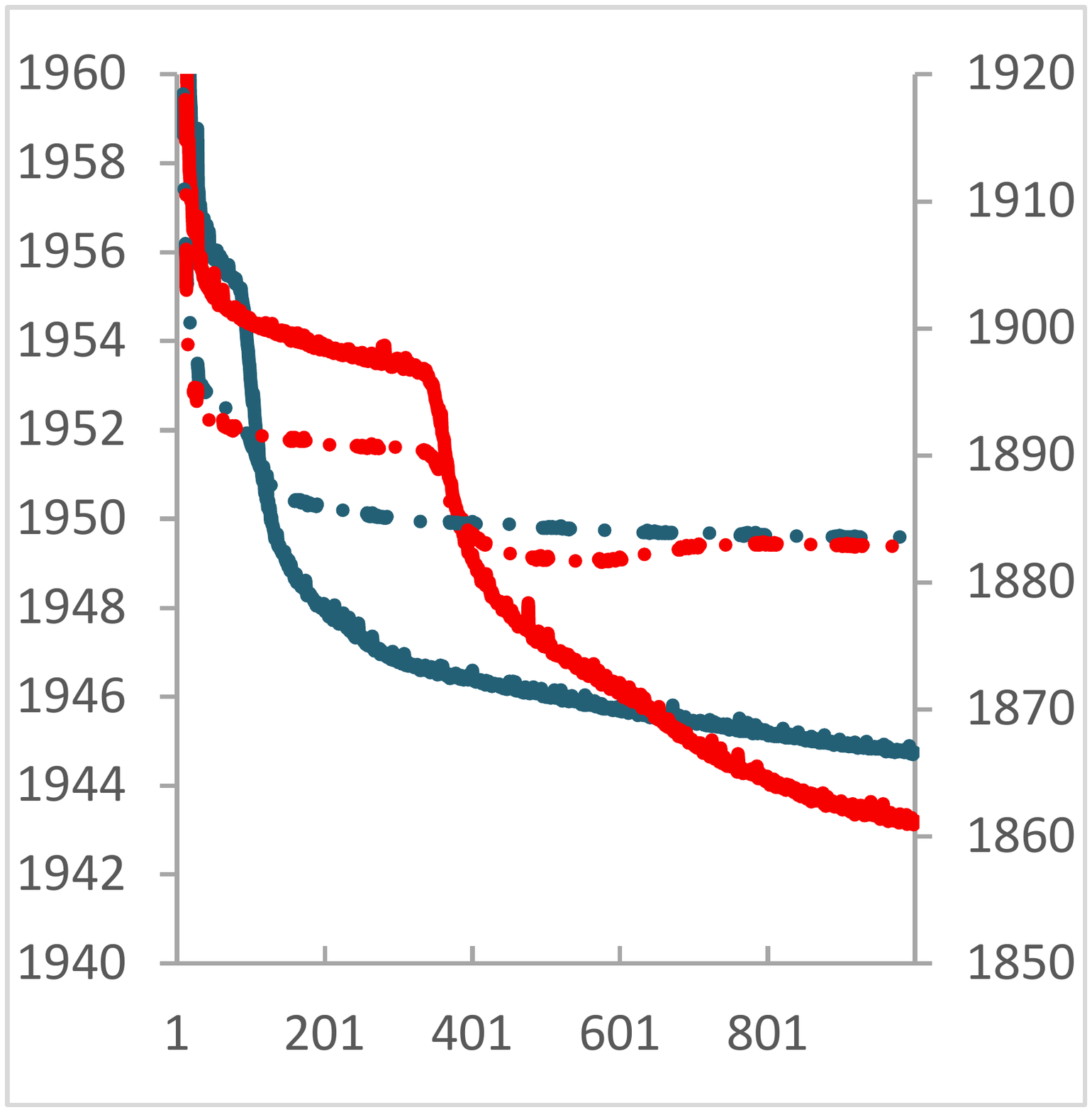}
    & \includegraphics[scale=0.22]{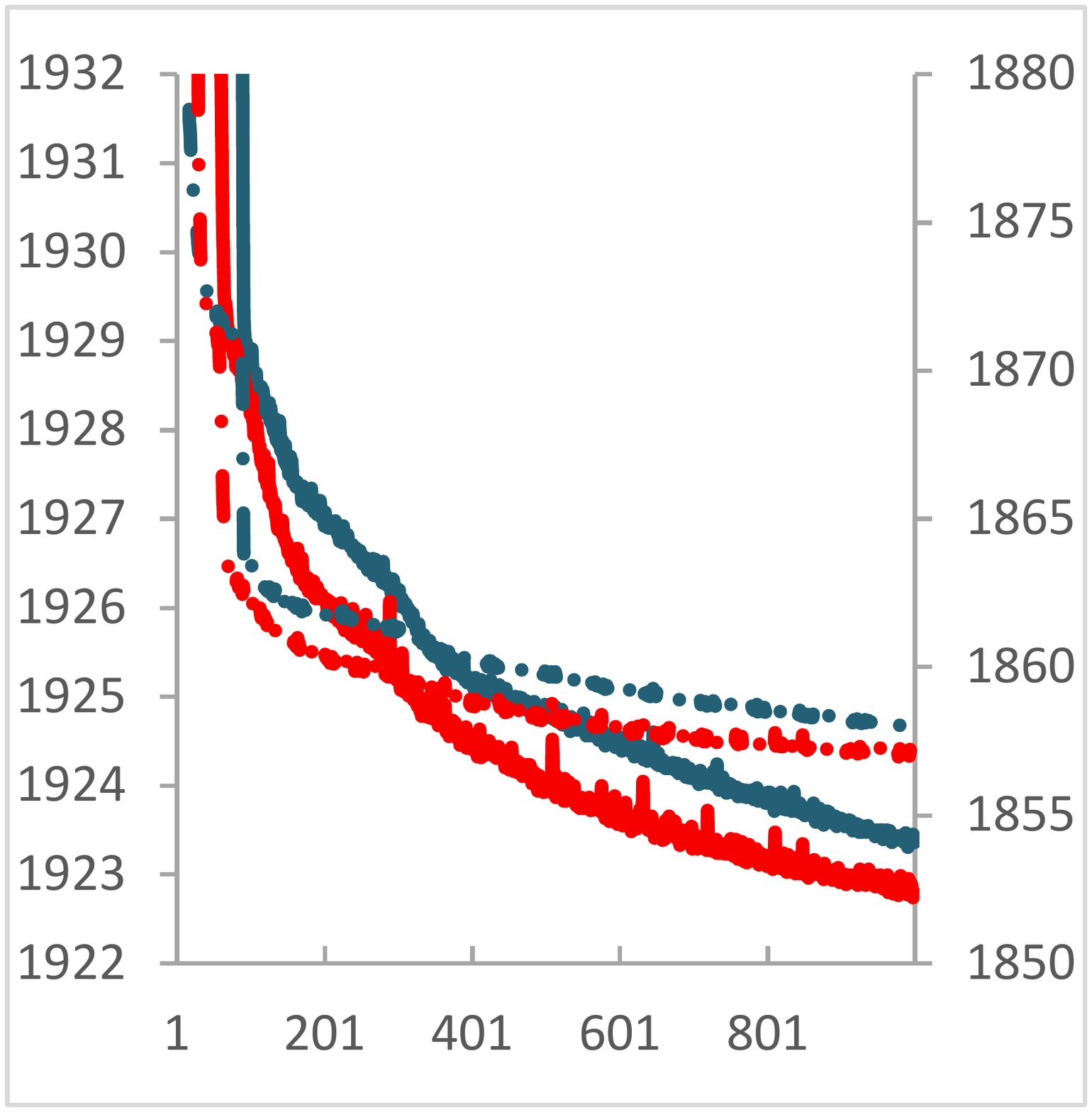}
    & \includegraphics[scale=0.22]{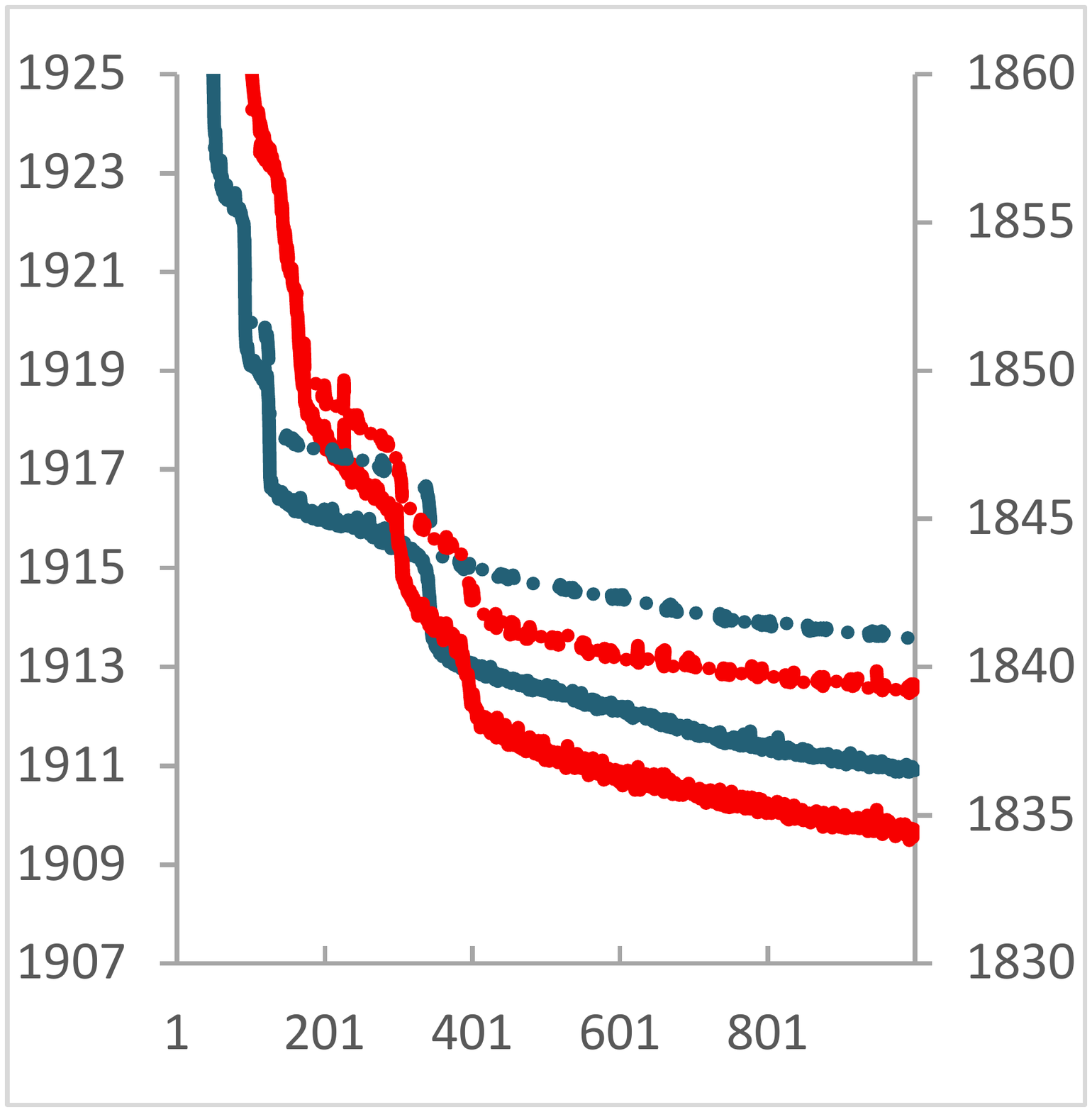}\\
    & SVHN (K=2) & SVHN (K=5) & SVHN (K=10)
    
  \end{tabular}
  \caption{Loss values (i.e. negative lower bound of marginal
    likelihood as described in \cite{kingma_auto-encoding_2013})
    versus training progresses. The horizontal and vertical axes are
    the number of epochs trained and the loss values,
    respectively. Note that loss values on the SVHN dataset are
    plotted with two different vertical axes to avoid the scaling
    problem caused by the gap between training and testing loss
    values. The left and right axes are for traing and testing sets,
    respectively.}
  \label{fig:loss}
\end{figure*}

\section{Discussion}
\label{sec:discussion}

Although our proposed LGAE and VAE both
\cite{kingma_auto-encoding_2013} have an \emph{encoder-decoder} based
architecture, they are essentially different. The loss function of
VAE, which is
\begin{equation}
  \label{eq:loss-VAE}
  L=\operatorname{KL}(Q(\vectorsym{Z}|\vectorsym{X})\rVert\mathcal{N}(\vectorsym{0},\matrixsym{I}))+L_{rec},
\end{equation}
is derived from the Bayesian lower bound of the marginal likelihood of
the data. In contrast, the loss function of LGAE is derived from a
geometrical perspective. Further, the Lie group intrinsic loss
$L_{LG}$ in Eq. \eqref{eq:loss} is a real metric, but the
KL-divergence in Eq. \eqref{eq:loss-VAE} is not. For examples, the
KL-divergence is not symmetric, nor does it satisfy the triangle
inequality.

Further, while both LGAE and VAE estimate Gaussian distributions using
neural networks, VAE does not address the non-vector output
problem. As a contrast, we systematically address this problem and
design an exponential mapping layer to solve it. One requirement
arising from the non-vector property of Gaussian distributions is that
the variance parameters be positive. To satisfy this requirement,
\cite{kingma_auto-encoding_2013} estimate the logarithm of variance
instead. This technique is equivalent to performing the exponential
mapping for the variance part. Without a theoretical foundation, it
was trial-and-error to choose \emph{exp} over other activations such
as \emph{relu} and \emph{softplus}. Our theoretical results confirms
that \emph{exp} makes more sense than others. Moreover, our
theoretical results further show that a better way is to consider a
Gaussian distribution as a whole rather than treat its variance part
only and address the problem in an empirical way.

Because the points of the tangent Lie algebra are already vectors, we
propose to use them as compressed representations of the input data
examples. These vectors contain information of the Gaussian
distributions and already incorporate the Lie group structural
information of the Gaussian distributions; therefore, they are more
informative than either a single mean vector or concatenating the mean
vector and variance vector naively together.

\begin{figure*}[ht]
  \centering
  \begin{tabular}{cccc}
    \includegraphics[scale=0.28]{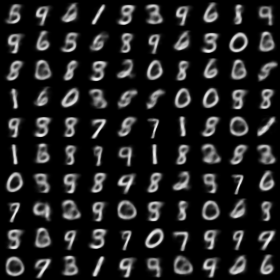}
    & \includegraphics[scale=0.28]{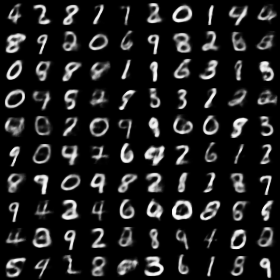}
    & \includegraphics[scale=0.28]{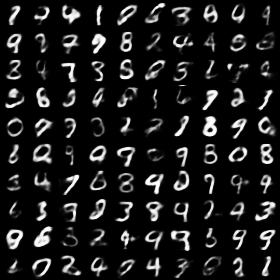}
    & \includegraphics[scale=0.28]{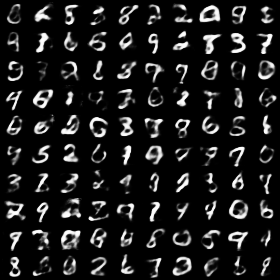}\\
    VAE (K=2) & VAE (K=5) & VAE (K=10) & VAE (K=20)\\
    & & & \\
    \includegraphics[scale=0.28]{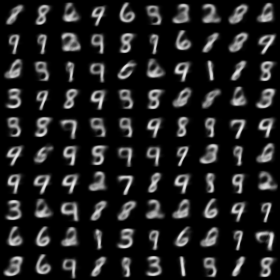}
    & \includegraphics[scale=0.28]{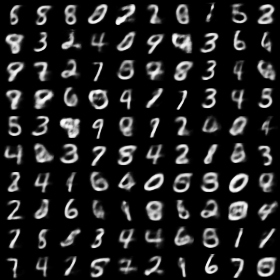}
    & \includegraphics[scale=0.28]{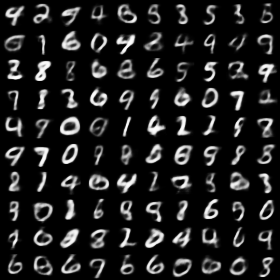}
    & \includegraphics[scale=0.28]{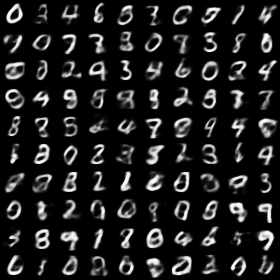}\\
    LGAE (K=2) & LGAE (K=5) & LGAE (K=10) & LGAE (K=20)\\
  \end{tabular}
  \caption{Generated images from randomly sampled latent vectors for
    the MNIST dataset. The upper and lower rows are generated by the VAE
    and LGAE models, respectively.}
  \label{fig:latent-mnist}
\end{figure*}

\section{Experiments}
\label{sec:experiments}

\subsection{Datasets}
\label{sec:datasets}

The proposed LGAE model is evaluated on two benchmark datasets:
\begin{itemize}
\item \textbf{MNIST}: The MNIST dataset
  \cite{lecun_gradient-based_1998-1} consists of a training set of
  $60,000$ examples of handwritten digits, and a test set of $10,000$
  examples. The digits have been size-normalized and centered into
  fixed-size $28\times{}28$ images.
\item \textbf{SVHN}: The SVHN dataset \cite{netzer_reading_2011} is
  also a collection of images of digits. But the background of image
  is more clutter than MNIST, so it is significantly harder to
  classify. It includes a training set of $73,257$ examples, a test
  set of $26,032$ examples, and an extra training set of $531,131$
  examples. In our experiments, we use the training and test sets
  only, and the extra training set is not used throughout the
  experiments.
\end{itemize}

\subsection{Settings}
\label{sec:settings}

Since VAE \cite{kingma_auto-encoding_2013} is the most related model
of LGAE, we use VAE as a baseline for comparisons. We follow the exact
experimental settings of \cite{kingma_auto-encoding_2013}. That is,
MLP with $500$ hidden units are used as encoder and decoder. In each
hidden layer, non-linear activation \emph{tanh} are applied. The
parameters of neurons are initialized by random sampling from
$\mathcal{N}(0, 0.01)$ and are optimized by Adagrad
\cite{duchi_adaptive_2011} with learning rate $0.01$. Mini-batches of
size $100$ are used. For the LGAE model, we use $\lambda=0.5$ as the
weight for the Lie group intrinsic loss.

For both the MNIST and SVHN datasets, we normalize the pixel values of
the images to be in the range $[0,1]$. Cross entropies between those
true pixel values and predicted values are used as reconstruction
errors in Eq. \eqref{eq:loss}. Both the VAE and LGAE are implemented
with PyTorch \cite{paszke_automatic_2017}. Note that there is no
matrix operation in the LGAE implementation thanks to the element-wise
closed-form solution presented in Section \ref{sec:expon-mapp-layer}
and \ref{sec:lie-group-intrinsic}. Therefore, the run-time is almost
the same as VAE. On a Nvidia GeForce GTX 1080 graphic card, it takes
about $12.5$ and $25$ seconds to train on the training set and test on
both the training and test sets for one epoch with mini-batches of
size $100$.

\begin{figure*}[t]
  \centering
  \begin{tabular}{ccc}
    \includegraphics[scale=0.28]{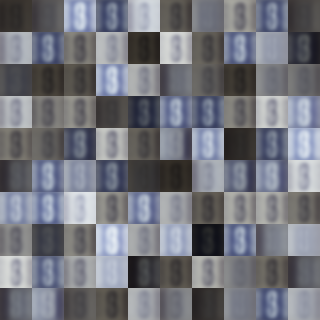}
    & \includegraphics[scale=0.28]{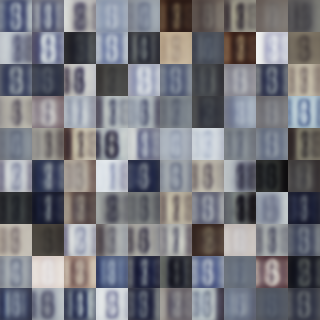}
    & \includegraphics[scale=0.28]{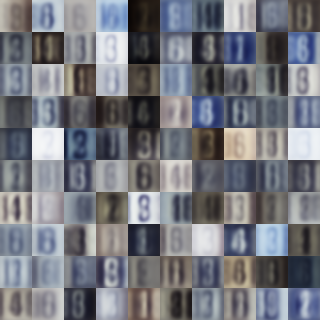}\\
    VAE (K=2) & VAE (K=5) & VAE (K=10)\\
    & &\\
    \includegraphics[scale=0.28]{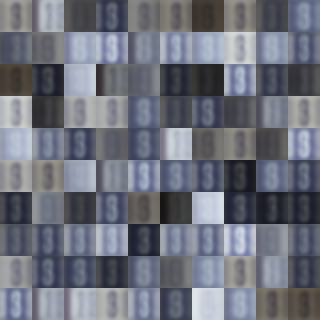}
    & \includegraphics[scale=0.28]{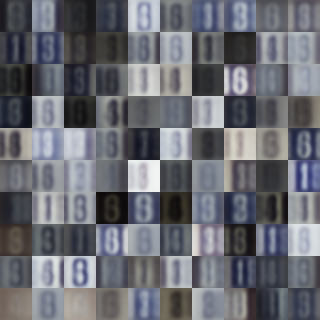}
    & \includegraphics[scale=0.28]{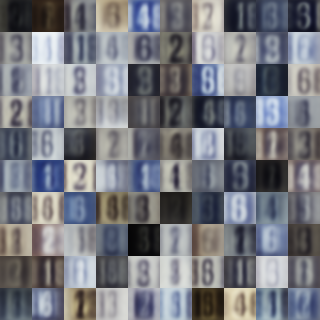}\\
    LGAE (K=2) & LGAE (K=5) & LGAE (K=10)\\
  \end{tabular}
  \caption{Generated images from randomly sampled latent vectors for
    the MNIST dataset. The upper and lower rows are generated by the
    VAE and LGAE models, respectively.}
  \label{fig:latent-svhn}
\end{figure*}

\subsection{Results}
\label{sec:results}

In the first experiment, we investigate the effectiveness of the
proposed exponential mapping layer. We design a variant of LGAE which
uses the same loss as VAE; i.e., we replace the Lie group intrinsic
loss with KL divergence but keep the exponential mapping layer in the
model. We call this variant LGAE-KL. Because it has the same loss
formula as VAE, it is fair to compare their values during training. We
train VAE and LGAE-KL on the training sets of MNIST and SVHN for
$1,000$ epochs. It is shown in \cite{kingma_auto-encoding_2013} that
the negative value of this loss is the lower bound of the marginal
likelihood in a Bayesian perspective. After each epoch, we evaluate
the loss both on the training and test sets. The values are plotted in
Figure \ref{fig:loss}. The curves show that our LGAE-KL obtains
smaller values of the loss both on training and test sets, which
indicates that learning on the tangent Lie algebra is more effective
than ignoring the Lie group structure of the Gaussian
distributions. Note that LGAE also obtains smaller lower bounds
consistently with different values of $K$. On the SVHN dataset, there
is a large gap between training and testing loss values, which is
possibly caused by the different sizes of the two sets, as well as the
cluttered backgrounds in images of the dataset. A simple multi-layer
perceptrons (MLP) network with two layers may not be able to
sufficiently fit the data; therefore, the lower bound value on the
training set decreases too slowly to close the gap. In the future,
designing a deep convolutional network architecture for the
\emph{encoder} and \emph{decoder} is a promising direction to extend
LGAE model to handle natural images.

In the second experiment, we compare the encoding capability of LGAE
with VAE. We train LGAE, LGAE-KL and VAE on the training sets. Then we
obtain the encoded representations of examples from both the training
and test sets. A nearest centroid classifier is then trained on the
representations of the training set and tested on the test set. Since
the nearest centroid is a very simple classifier which has no
hyper-parameters, the classification accuracy on the test set
indicates the representation power of encoded inputs. For VAE we test
two different representations: mean vector and a concatenation of mean
vector and covariance matrix (i.e. concatenate parameters of Gaussian
distributions). For LGAE-KL and LGAE we test those two kinds of
representations as well as the Lie algebra vector. Table
\ref{tab:nc-mnist} summarizes the results on the MNIST dataset.
\begin{table}[ht]
  \centering
  \caption{Classification accuracies ($\%$) on the MNIST dataset.}
  \begin{tabular}{l|c|c|c|c}
    \toprule
    Method & $K=2$ & $K=5$ & $K=10$ \\
    \midrule
    \midrule
    VAE ($\vectorsym{\mu}$) & $56.22$ & $78.53$ & $85.68$\\
    VAE ($\vectorsym{\mu} \Vert \matrixsym{\Sigma}$) & $56.22$ & $78.53$ & $85.68$\\
    \midrule
    \midrule
    LGAE-KL ($\vectorsym{\mu}$) & $58.88$ & $80.77$ & $85.86$\\
    LGAE-KL ($\vectorsym{\mu} \Vert \matrixsym{\Sigma}$) & $58.88$ & $80.77$ & $85.90$\\
    LGAE-KL ($\matrixsym{g}$) & $60.82$ & $80.54$ & $86.20$\\
    LGAE ($\vectorsym{\mu}$) & $59.37$ & $79.81$ & $85.45$\\
    LGAE ($\vectorsym{\mu} \Vert \matrixsym{\Sigma}$) & $54.80$ & $79.26$ & $85.47$\\
    LGAE ($\matrixsym{g}$) & $\bm{68.70}$ & $\bm{83.27}$ & $\bm{86.88}$\\
    \bottomrule
  \end{tabular}
  \label{tab:nc-mnist}
\end{table}
From Table \ref{tab:nc-mnist}, we can see that LGAE-KL gets better
performance than VAE, which reconfirms the effectiveness of the
proposed exponential mapping layer. LGAE performs the best, which
indicates the superiority of the Lie group intrinsic loss over KL
divergence. Moreover, the results also show that naively concatenating
covariance with mean does not contribute much to the performances, and
sometimes even hurts it. This phenomenon indicates that treating
Gaussians as vectors cannot fully extract important geometric
structural information from the manifold they formed.

To illustrate the generative capability of LGAE, we randomly generate
images using the model and plot them along with images generated from
VAE. Figures \ref{fig:latent-mnist} and \ref{fig:latent-svhn} show the
generated images from both models trained on the MNIST and SVHN
datasets, respectively.

\section{Conclusions}
\label{sec:conclusions}

We propose Lie group auto-encoder (LGAE), which is a
\emph{encoder-decoder} type of neural network model. Similar to VAE,
the proposed LGAE model has the advantages of generating examples from
the training data distribution, as well as mapping inputs to latent
representations. The Lie group structure of Gaussian distributions is
systematically exploited to help design the network. Specifically, we
design an exponential mapping layer, derive a Lie group intrinsic
loss, and propose to use Lie algebra vectors as latent
representations. Experimental results on the MNIST and SVHN datasets
testify to the effectiveness of the proposed method.


\bibliographystyle{icml2019}
\bibliography{ref}

\end{document}